\title{\LARGE \bf
Improved Corner Cutting Constraints for Mixed-Integer Motion Planning of a Differential Drive Micro-Mobility Vehicle
}
\author{Angelo Caregnato-Neto$^{1}$ and Janito Vaqueiro Ferreira$^{1}$% <-this % stops a space
	\thanks{This study was financed, in part, by the S\~ao Paulo Research Foundation (FAPESP), Brasil. Process Number 2024/04703-0.}% <-this % stops a space
	\thanks{$^{1}$Angelo Caregnato-Neto and Janito Vaqueiro Ferreira are with Department of Computational Mechanics, School of Mechanical Engineering, University of Campinas, Campinas, Brazil. \tt\small {caregnato.neto@ieee.org}, \tt\small {janito@unicamp.br} }%
}
\newtheorem{theorem}{Theorem} 
\newtheorem{lemma}{Lemma} 
\begin{document}

\maketitle
\thispagestyle{empty}
\pagestyle{empty}

%%%%%%%%%%%%%%%%%%%%%%%%%%%%%%%%%%%%%%%%%%%%%%%%%%%%%%%%%%%%%%%%%%%%%%%%%%%%%%%%
\begin{abstract}

This paper addresses the problem of motion planning for differential drive micro-mobility platforms. This class of vehicle is designed to perform small-distance transportation of passengers and goods in structured environments. Our approach leverages mixed-integer linear programming (MILP) to compute global optimal collision-free trajectories taking into account the kinematics and dynamics of the vehicle. We propose novel constraints for intersample collision avoidance and demonstrate its effectiveness using pick-up and delivery missions and statistical analysis of Monte Carlo simulations. The results show that the novel formulation provides the best trajectories in terms of time expenditure and control effort when compared to two state-of-the-art approaches.
\end{abstract}

%%%%%%%%%%%%%%%%%%%%%%%%%%%%%%%%%%%%%%%%%%%%%%%%%%%%%%%%%%%%%%%%%%%%%%%%%%%%%%%%
\section{INTRODUCTION}

Micro-mobility vehicles aim to satisfy the demand for small-distance transportation of goods and passengers. Idealized mainly as low-velocity platforms that operate in structured environments such as industrial parks and university campuses \cite{micromob1}, the development of autonomous micro-mobility fleets has received extensive attention in the literature \cite{micromobAR,micromob2} and may be fundamental for the development of sustainable cities \cite{micromob3}. Due to their small size and simplicity, these vehicles have also been designed as differential drive platforms \cite{micromobAR} allowing the use of several planning and control techniques originally devised for robots \cite{cones} in the development of their control architectures.

% Paragraph about MILP
Mixed-Integer Linear Programming (MILP) is a class of mathematical programs where continuous and integer decision variables are optimized considering a linear cost and affine constraints. It has been extensively used for motion planning of a multitude of platforms, such as differential drive robots \cite{cones} and automobiles \cite{quirynen2024real}. MILP is particularly attractive due to its capability to yield global optimal solutions in finite time \cite{richards2005mixed}, handle the nonconvexity of collision avoidance problems, and provide a framework where trajectory planning and decision-making problems can be solved jointly \cite{afonso2020}.

% Paragraph about corner cutting
An important issue faced in the early stages of research on  MILP-based motion planners was the prevention of intersample collisions, i.e., although an agent was guaranteed to be outside the boundaries of an obstacle at discrete planning time steps, collisions could still occur if the planned trajectory would cut through an object between such steps. 

Early solutions to this issue required a compromise between computational complexity and conservatism of the solutions. The use of small discretization steps alongside the proper inflation of the obstacle representation within the optimization problem could prevent the occurrence of intersample collisions \cite{kuwata}. However, the computational complexity of the problem increases as these steps become smaller and, conversely, if the steps got larger, the obstacles would have to be further inflated resulting in more conservative solutions or even unfeasibility.
Further progress followed in \cite{kawakami_cornercut} with the development of constraints that guaranteed intersample collision avoidance. The idea was improved in \cite{richardsCornerCut} with an efficient approach that introduces less complexity to the optimization problem while providing identical solutions.  Alternative approaches leveraged the idea of assigning entire segments of the trajectories to predetermined safe regions in the environment \cite{cornercut_tedrake}, effectively preventing intersample avoidance and drastically reducing the complexity of the resulting mixed-integer optimization. Nevertheless, all existing approaches still limit the search space of the optimization and, consequently, can yield conservative solutions.

This paper explores a new condition based on sampled intermediary points of line segments written as the convex combination of its extremities. This scheme was originally devised in \cite{ecc23} for clear line-of-sight connections in a multi-robot coordination problem. We show that the same condition can be readily modified to encode intersample obstacle avoidance constraints. Then, we propose a novel method designed specifically for the problem of differential-drive micro-mobility vehicles and demonstrate through simulation and statistical analysis that the new approach yields more efficient maneuvers at the cost of a modest increase in optimization time.

In this work, we denote sets of subsequent integers from $c$ to $d$ as $\mathcal{I}_c^d = \{c, c+1,\dots, d\}$. A binary implication, e.g., for $b \in \{0,1\}$,  $b \implies `\circ$' is equivalent to $b = 1 \implies '\circ$'. A 1-vector of size $n$ is denoted by $\mathbf{1}_n = [1,1,\dots,1] \in \mathbb{R}^n$.

\section{Problem statement}\label{sec:prob}

Consider a differential-drive vehicle with dynamics discretized with a sampling period $T>0$ and represented by
\begin{align}
	& r_{x}(k+1) = r_{x}(k) +\sigma(k)\cos(\psi(k)), \label{eq:dyn_rx} \\
	& r_{y}(k+1) = r_{y}(k) +\sigma(k)\sin(\psi(k)), \label{eq:dyn_ry}\\
	& \sigma(k) = \xi(k)T + 0.5T^2a(k), \label{eq:dyn_sigma}\\
	& \xi(k+1) = \xi(k) + a(k)T, \label{eq:lin_vel}\\
	& \psi(k+1) = \psi(k) + \Delta \psi(k),\ \forall k \geq 0, \label{eq:dyn_psi}
\end{align}
where $a,\xi,\sigma \in \mathbb{R}$ are the linear acceleration, velocity, and displacement, respectively. The position components in the $x$ and $y$ axes of an inertial coordinate system are represented by $r_x \in \mathbb{R}$ and $r_y \in \mathbb{R}$, respectively. The orientation of the robot is denoted by  $\psi \in [-\pi,\pi]$ and the corresponding increment by $\Delta \psi$. 
For a smooth operation of the vehicle its linear acceleration, velocity, and orientation increment must be constrained to the following bounds: $a \in [a^{\text{min}},a^{\text{max}}]$, $\xi \in [\xi^{\text{min}},\xi^{\text{max}} ]$, and $\Delta \psi \in [\Delta \psi^{\text{min}},\Delta \psi^{\text{max}}]$.

The vehicle operates in a planar region of operation represented by polytope  $\mathcal{A} = \{ \boldsymbol{\alpha} \in \mathbb{R}^2\ \vert \ \mathbf{P}^{\text{ope}} \boldsymbol{\alpha} \leq \mathbf{q}^{\text{ope}} \}$,
%\begin{align
	%	&\mathcal{A} = \{ \boldsymbol{\alpha} \in \mathbb{R}^2\ \vert \ \mathbf{P}^{\text{ope}} \boldsymbol{\alpha} \leq \mathbf{q}^{\text{ope}} \},
	%\end{align}
	where $\mathbf{P}^{\text{ope}} \in \mathbb{R}^{2\times n_s^{\text{ope}}}$ and $\mathbf{q}^{\text{ope}} \in \mathbb{R}^{n_s^{\text{ope}}}$ with $n_s^{\text{ope}} \in \mathbb{N}$ being the number of sides of $\mathcal{A}$.
	This region contains $n_o \in \mathbb{N}$ obstacles  $\mathcal{O}_c = \{ \boldsymbol{\alpha} \in \mathbb{R}^2\ \vert \ \mathbf{P}^{\text{obs}}_c \boldsymbol{\alpha} \leq \mathbf{q}_c^{\text{obs}} \},\ \forall c \in \mathcal{I}_1^{n_o}$ ,
	% \begin{align}
		%	\mathcal{O}_c = \{ \boldsymbol{\alpha} \in \mathbb{R}^2\ \vert \ \mathbf{P}^{\text{obs}}_c \boldsymbol{\alpha} \leq \mathbf{q}_c^{\text{obs}} \},\ \forall c \in \mathcal{I}_1^{n_o} ,
		%\end{align}
		where $\mathbf{P}^{\text{obs}}_c \in \mathbb{R}^{2\times n^{\text{obs}}_{s,c}}$ and $n^{\text{obs}}_{s,c} \in \mathbb{N}$ is the number of sides of the $c$-th obstacle. 
		
		We consider a pick-up and delivery problem where the vehicle must collect goods or passengers at a starting region $\mathcal{P}= \{ \boldsymbol{\alpha} \in \mathbb{R}^2\ \vert \ \mathbf{P}^{\text{str}} \boldsymbol{\alpha} \leq \mathbf{q}^{\text{str}} \}$, $\mathbf{P}^\text{str} \in \mathbb{R}^{2 \times n^\text{str}_s}$, with $n^\text{str}_s \in \mathbb{N}$ being the number of sides of region $\mathcal{P}$. Then, the vehicle must deliver the passenger to a series of $n_d \in \mathbb{N}$ destination regions $ \mathcal{D}_t = \{\boldsymbol{\alpha} \in \mathbb{R}^2\ \vert \ \mathbf{P}^\text{des}_t \boldsymbol{\alpha} \leq \mathbf{q}^\text{des}_t \}$, $\forall t \in \mathcal{I}_1^{n_d}$ sequentially, where $\mathbf{P}^\text{des}_t \in \mathbb{R}^{2\times n^\text{des}_{s,t}}$ with $n^\text{des}_{s,t} \in \mathbb{N}$ being the number of sides of region $\mathcal{D}_t$. The objective is to deliver the load to its destinations while performing a smooth and efficient maneuver in terms of time expenditure and control effort. See Fig. \ref{fig:scenario} for an example of the envisioned scenario.
		
		\begin{figure}[!ht]
			\begin{center}	\includegraphics[width=0.40\textwidth]{./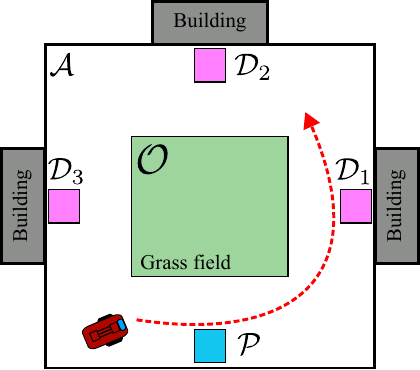}
			\end{center}
			\caption{Example of envisioned scenario: three buildings around a grass field. The vehicle transports passengers from a pick-up region $\mathcal{P}$ around the obstacle $\mathcal{O}$ through the drop regions $\mathcal{D}_i$, $i=1,2,3$ in each building.}
			\label{fig:scenario}
		\end{figure}

\section{Mixed-Integer Motion Planning}

This section presents the  MILP formulation for pick-up and delivery of passengers using a micro-mobility differential drive vehicle. We consider that a maneuver with a maximum horizon of $N^\text{max} \in \mathbb{N}$ time steps must be computed, i.e., the vehicle has a maximum $TN^\text{max}$ seconds to pick up a passenger at region $\mathcal{P}$ and drive him through regions $\mathcal{D}_t,\ \forall t \in \mathcal{I}_1^{n_d}$. The mission is finished after the last destination $\mathcal{D}_{n_d}$ is reached.

\subsection{Prediction model} \label{subsec:pred_model}
Although only linear constraints can be encoded into MILP formulations, the nonlinear dynamics equations  (\ref{eq:dyn_rx}-\ref{eq:dyn_ry}) can be integrated by considering a finite number of  $n^\text{ori} \in \mathbb{N}$ potential vehicle orientations $\boldsymbol{\Psi} = [\bar{\psi}_1,\bar{\psi}_2,\dots, \bar{\psi}_{n^\text{ori}} ]$ and then imposing the following constraints \cite{cones}, $\forall g \in \mathcal{I}_1^{n^\text{ori}}$, $\forall k \in \mathcal{I}_0^{N^\text{max}-1}$, 
\begin{align}
	&	b^\text{ori}_g(k)  \implies \begin{cases}
		r_{x}(k+1) = r_{x}(k) +\sigma(k)\cos(\bar{\psi}_g) \\
		r_{y}(k+1) = r_{y}(k) +\sigma(k)\sin(\bar{\psi}_g) \\
	\end{cases} \label{const:dyn}
\end{align}
and, $\forall k \in \mathcal{I}_0^{N^\text{max}}$,
\begin{align}
	& \sum_{g=1}^{n^\text{ori}} b_g^\text{ori}(k) = 1,  \label{const:ori_bin}
\end{align}
where $b^\text{ori}_g \in \{0,1\}$ is a binary variable associated with the orientation $\bar{\psi}_g$. Constraint (\ref{const:dyn}) imposes that if $b^\text{ori}_g$ is active, then the dynamics (\ref{eq:dyn_rx}-\ref{eq:dyn_ry}) are imposed considering the predetermined angle $\bar{\psi}_g$ and the trigonometric functions (evaluated \textit{a priori}) are constant values, yielding piecewise affine constraints suitable for the MILP formulation. Constraint (\ref{const:ori_bin}) guarantees that only one orientation can be selected at each time step. For the sake of conciseness, we do not discuss the implementation of (\ref{const:dyn}) and refer the reader to \cite{cones} for a thorough presentation. The dynamics (\ref{eq:dyn_sigma}-\ref{eq:dyn_psi}) are represented by linear equations on the optimization values and are encoded directly.

The orientation of the vehicle at time step $k$ is computed as $\psi(k) =\boldsymbol{\Psi}\left[b^\text{ori}_1(k),\dots,b^\text{ori}_{n^\text{ori}}(k)\right]^\top$ and constraints on the orientation increment are written as
\begin{align}
	\Delta \psi^\text{min} \leq \vert \psi(k+1)-\psi(k) \vert \leq \Delta \psi^\text{max}. \label{const:psi_inc}
\end{align}
\subsection{Pick-up and delivery of loads}

Let $\mathbf{r}(k) = [r_x(k),r_y(k)]^\top$ denote the position vector. Then, we employ the ``Big-M" approach to encode constraints that enforce the vehicle to reach the pick-up region,
\begin{align}
	& \mathbf{P}^\text{str} \mathbf{r}(k) \leq \mathbf{q}^\text{str} + (1-b^\text{str}(k))M,\ \forall k \in \mathcal{I}_0^{N^\text{max}}, \label{const:pick_up_region_1}\\
	& \sum_{k=1}^{N^\text{max}} b^\text{str}(k) = 1, \label{const:pick_up_region_2}
\end{align}
where $b^\text{str} \in \{0,1\}$ is a binary variable and $M$ the predetermined ``Big-M" constant. Constraint (\ref{const:pick_up_region_1}) enforces that if $b^\text{str}(k) =1$, then the vehicle must be within the pick-up region $\mathcal{P}$ at time step $k$, whereas (\ref{const:pick_up_region_2}) guarantees that this binary to be activated for at least one time step in the interval $ 0 \leq k \leq N^\text{max}$. Henceforth, constraints similar to (\ref{const:pick_up_region_1}-\ref{const:pick_up_region_2}) will be presented in their equivalent compact form as $b^\text{str}(k) \implies \mathbf{r}(k) \in \mathcal{P},\ \forall k \in \mathcal{I}_o^{N^\text{str}}$. Thus, the constraints for reaching each destination region are written as
\begin{align}
	b^\text{des}_t(k) \implies \mathbf{r}(k) \in \mathcal{D}_t,\ \forall t \in \mathcal{I}_1^{n_d},\ \forall k \in \mathcal{I}_0^{N^\text{max}},
\end{align}
where $b^\text{des}_t \in \{0,1\}$ is the $t$-th destination binary associated with $\mathcal{D}_t$. The order of visitation for $\mathcal{P}$ and $\mathcal{D}_t,\ \forall t \in \mathcal{I}_1^{n_t}$, is established with the following constraints, $\forall k \in \mathcal{I}_0^{N^\text{max}}$,
\begin{align}
	\sum_{n=1}^k b^\text{str}(n) \geq 	\sum_{n=1}^k b_1^\text{des}(n) \geq \dots \sum_{n=1}^k b_{n_t}^\text{des}(n). \label{const:seq_tar}
\end{align}
Constraint (\ref{const:seq_tar}) prevents any destination binary $b^\text{des}_t$ from taking the value of 1 before $b^\text{str}$. Similarly, the destination binary corresponding to  $\mathcal{D}_{t+1}$ can never be activated before the one associated with $\mathcal{D}_t$. Thus, the planned trajectory of the vehicle will reach $\mathcal{P}$ and $\mathcal{D}_1,\dots, \mathcal{D}_{n_t}$ sequentially.

Finally, we establish the terminal condition by determining the requirements for horizon binary $b^\text{hor} \in \{0,1\}$ to be activated:
\begin{align}
	b^\text{hor}(k) \leq \sum_{n=0}^k b^\text{des}_{n_t}(n),\ \forall k \in \mathcal{I}_0^{N^\text{max}} \label{const:terminal}.
\end{align}
Constraint (\ref{const:terminal}) enforces that $b^\text{hor}(k)$ can take the value of 1 ( mission is finished) if the last destination has been reached by the vehicle as indicated by $b^\text{des}_{n_t}$.

\subsection{Collision Avoidance and Conventional Intersample Avoidance}

The prevent collisions with the operational area ($\mathcal{A}$) boundaries, the following constraints are enforced
\begin{align}
	\mathbf{P}^\text{ope} \mathbf{r}(k) \leq \mathbf{q}^\text{ope} + \mathbf{1}_{n^\text{ope}_{s}}\underbrace{M\sum_{n=1}^{k-1}b^\text{hor}(n)}_{\Gamma(k)} \label{const:ope_area}.
\end{align}
Notice that (\ref{const:ope_area}) contains a Big-M relaxation term $\Gamma(k)$ associated with $b^\text{hor}$ which guarantees that the constraints are not enforced for planning steps $k > N$, with $b^\text{hor}(N) = 1$, i.e., constraints are relaxed after the mission is finished.

Considering the polytopic obstacles discussed in Section \ref{sec:prob}, collision avoidance constraints are encoded in the MILP formulation as, $\ \forall k \in \mathcal{I}_0^{N^\text{max}}$, $\forall c \in \mathcal{I}_1^{n_o}$,
\begin{align}
	&-\mathbf{P}^\text{obs}_c \mathbf{r}(k) \leq -\mathbf{q}^\text{obs}_c + (\mathbf{1}_{n^\text{obs}_{s,c}} - \mathbf{b}^\text{obs}_c(k))M  + \mathbf{1}_{n^\text{ope}_{s}}\Gamma(k), \label{const:col_avoid_1}
\end{align}
and, $\forall c \in \mathcal{I}_1^{n_o}$,
\begin{align}
	& \sum_{k=0}^{N^\text{max}} b^\text{obs}_c(k)\geq 1, \label{const:col_avoid_2}
\end{align}
where $\mathbf{b}^\text{obs} \in \{0,1\}^{n^\text{obs}_{s,c}}$ is a vector of binary variables. Constraints (\ref{const:col_avoid_1}) and (\ref{const:col_avoid_2}) guarantee that the vehicle is within at least one external halfspace of each obstacle at all time steps, i.e., outside its boundaries and consequently collision-free.

The imposition of (\ref{const:col_avoid_1}) and (\ref{const:col_avoid_2}) alone does not prevent the intersample collisions illustrated by Fig. \ref{fig:cornercut_example}a, i.e., the problem of trajectories that potentially cut through the obstacle between planning time steps. The following constraint, proposed in \cite{richardsCornerCut} is the general state-of-art solution for this problem:
\begin{align}
	&-\mathbf{P}^\text{obs}_c \mathbf{r}(k+1) \leq -\mathbf{q}^\text{obs}_c + (\mathbf{1}_{n^\text{obs}_{s,c}} - \mathbf{b}^\text{obs}_c(k))M  \nonumber \\ &\qquad \qquad \qquad \qquad \qquad \qquad \qquad \qquad+\mathbf{1}_{n^\text{obs}_{s,c}}\Gamma(k) \label{const:corner_cut_old}.
\end{align}
Constraint (\ref{const:corner_cut_old}) is very similar to (\ref{const:col_avoid_1}), with the only distinction being its imposition over $\mathbf{r}(k+1)$ instead of $\mathbf{r}(k)$. Thus, the obstacle binary vector $\mathbf{b}^\text{obs}_g(k)$ must satisfy the collision avoidance constraint of the vehicle's position at both $k+1$ and $k$. In practice, this is a sufficient condition that enforces $\mathbf{r}(k)$ and $\mathbf{r}(k+1)$ to belong to at least one coincident external halfspace of each obstacle and, consequently, introduces conservatism to the solutions. 
An example is presented in Fig. \ref{fig:cornercut_example}b, where the trajectory must reach an intersection of bottom and right halfspaces before the crossing occurs, requiring an extra time step to do so. Direct transitions (Fig. \ref{fig:cornercut_example}c) are not allowed under constraint (\ref{const:corner_cut_old}).

\begin{figure}[!ht]
	\begin{center}	\includegraphics[width=0.5\textwidth]{./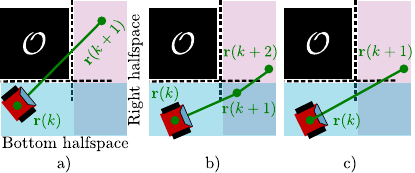}
	\end{center}
	\caption{a) The intersample collision problem. b) A potential solution using (\ref{const:corner_cut_old}) and c) an example of a more efficient maneuver that violates this constraint.}
	\label{fig:cornercut_example}
\end{figure}

\subsection{Intermediary-point (IP) intersample avoidance} \label{subsec:interpoints}

This section demonstrates how the constraint proposed in \cite{ecc23} for line-of-sight connectivity can be modified to guarantee intersample collision avoidance for the vehicle. Let 
\begin{align}
	\mathcal{L}(k) = \{ \boldsymbol{\alpha}\mathbf{r}(k) + (1-\boldsymbol{\alpha})\mathbf{r}(k+1)\ \vert\ \ \forall\boldsymbol{\alpha} \in [0,1 ] \}
\end{align}
be the line segment connecting the position of the vehicle in subsequent steps and described as the convex combination of $\mathbf{r}(k)$ and $\mathbf{r}(k+1)$.   Since differential drive systems move towards their orientation, $\mathcal{L}(k)$ is also the path followed by the vehicle between $k$ and $k+1$. Lemma \ref{lemma} provides conditions for intersample collision avoidance.

\begin{lemma}\label{lemma}
	Let $\mathcal{L}(k)$ be the path of the vehicle connecting its positions $\mathbf{r}(k) \in \mathcal{H}_1$ and $\mathbf{r}(k+1) \in \mathcal{H}_2$, with $\mathcal{H}_1 \subset \mathbb{R}^2$ and $\mathcal{H}_2 \subset \mathbb{R}^2$ being external halfspaces of an obstacle $\mathcal{O}$. If $ \exists \mathbf{z} \in \mathcal{L}\ \vert \  \mathbf{z} \in (\mathcal{H}_1 \cap \mathcal{H}_2)$ then $\mathcal{L}(k) \cap \mathcal{O} = \emptyset$.
\end{lemma}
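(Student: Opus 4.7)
The plan is to split the segment $\mathcal{L}(k)$ at the witness point $\mathbf{z}$ and use convexity of each halfspace to trap the two sub-segments inside $\mathcal{H}_1$ and $\mathcal{H}_2$ respectively. Since each external halfspace of $\mathcal{O}$ has empty (interior) intersection with $\mathcal{O}$ by definition, the union of the two sub-segments avoids the obstacle, which is what we need.

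More concretely, I would first write $\mathbf{z}$ explicitly as $\mathbf{z}=\bar\alpha\,\mathbf{r}(k)+(1-\bar\alpha)\mathbf{r}(k+1)$ for some $\bar\alpha\in[0,1]$ guaranteed by $\mathbf{z}\in\mathcal{L}(k)$, and partition the segment as $\mathcal{L}(k)=\mathcal{L}_1\cup\mathcal{L}_2$, where $\mathcal{L}_1$ is the sub-segment with endpoints $\mathbf{r}(k)$ and $\mathbf{z}$, and $\mathcal{L}_2$ the sub-segment with endpoints $\mathbf{z}$ and $\mathbf{r}(k+1)$. Parameterizing each sub-segment as a convex combination of its own two endpoints is a quick check that involves just rescaling the original parameter $\boldsymbol{\alpha}$ on the sub-intervals $[0,\bar\alpha]$ and $[\bar\alpha,1]$.

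Next I would invoke convexity of halfspaces: by hypothesis $\mathbf{r}(k)\in\mathcal{H}_1$ and $\mathbf{z}\in\mathcal{H}_1\cap\mathcal{H}_2\subseteq\mathcal{H}_1$, so $\mathcal{L}_1\subseteq\mathcal{H}_1$; symmetrically, $\mathbf{z}\in\mathcal{H}_2$ and $\mathbf{r}(k+1)\in\mathcal{H}_2$ give $\mathcal{L}_2\subseteq\mathcal{H}_2$. Combining, $\mathcal{L}(k)\subseteq\mathcal{H}_1\cup\mathcal{H}_2$.

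Finally, since $\mathcal{H}_1$ and $\mathcal{H}_2$ are external halfspaces of $\mathcal{O}$ (i.e., the complementary halfspaces of two supporting inequalities of the polytope $\mathcal{O}$), each of them satisfies $\mathcal{H}_i\cap\mathcal{O}=\emptyset$ in the relevant sense used in the paper (the strict complement of one defining inequality of $\mathcal{O}$). Therefore $(\mathcal{H}_1\cup\mathcal{H}_2)\cap\mathcal{O}=\emptyset$ and hence $\mathcal{L}(k)\cap\mathcal{O}=\emptyset$, as desired. The only mildly delicate point is the convention for ``external halfspace''—strict versus non-strict complement of a facet inequality—but this is a bookkeeping issue rather than a substantive obstacle, since the paper's collision-avoidance constraints (\ref{const:col_avoid_1}) already fix this convention and it transfers verbatim to both $\mathbf{r}(k)$, $\mathbf{r}(k+1)$, and $\mathbf{z}$.
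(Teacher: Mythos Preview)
Your argument is correct and follows essentially the same route as the paper's own proof: split $\mathcal{L}(k)$ at the witness point $\mathbf{z}$ into two sub-segments, use convexity of each halfspace to contain the corresponding sub-segment, and conclude via $\mathcal{H}_i\cap\mathcal{O}=\emptyset$. The only differences are cosmetic---you write out the convex-combination parameter $\bar\alpha$ explicitly and pass through the intermediate inclusion $\mathcal{L}(k)\subseteq\mathcal{H}_1\cup\mathcal{H}_2$---so nothing further is needed.
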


%For the sake of completeness, we provide a proof very similar to the one present in \cite{ecc23}.

\begin{proof}
	Consider a point $\mathbf{z} \in \mathcal{L}(k)$ that belongs simultaneously to $\mathcal{H}_1$ and $\mathcal{H}_2$. This point divides $\mathcal{L}(k)$ into two segments $\mathcal{L}_1(k)$ and $\mathcal{L}_2(k)$ such that $\mathcal{L}_1(k) \cup \mathcal{L}_2(k) = \mathcal{L}(k)$. 
	The extremities of $\mathcal{L}_1(k)$ are the points $\mathbf{r}_k \in \mathcal{H}_1$ and $\mathbf{z} \in \mathcal{H}_1$. Similarly, the extremities of $\mathcal{L}_2(k)$ are $\mathbf{z}_k \in \mathcal{H}_2$ and $\mathbf{r}(k+1) \in \mathcal{H}_2$.
		%and $\mathcal{L}_2(k)$ are the pairs $\mathbf{r}_k \in \mathcal{H}_1$, $\mathbf{z} \in \mathcal{H}_1$ and $\mathbf{z},\mathbf{r}(k+1) \in \mathcal{H}_2$, respectively. 
	%
	Since all halfspaces of the polygon $\mathcal{O}$ are convex sets, it follows that $\mathcal{L}_1 \subset \mathcal{H}_1$ and $\mathcal{L}_2 \subset \mathcal{H}_2$. By definition of \textit{external} halfspaces it holds that $\mathcal{H}_1 \cap \mathcal{O} = \emptyset$ and $\mathcal{H}_2 \cap \mathcal{O} = \emptyset$ and consequently $\mathcal{L}_1(k) \cap \mathcal{O} = \emptyset$ and $\mathcal{L}_2(k) \cap \mathcal{O} = \emptyset$. Since $\mathcal{L}_1(k) \cup \mathcal{L}_2(k) = \mathcal{L}(k)$, it follows that $\mathcal{L}(k) \cap \mathcal{O} = \emptyset$.
\end{proof}

Following the approach in \cite{ecc23}, we take $n_p \in \mathbb{N}$ equally-spaced intermediary points (IPs) $\{ \mathbf{s}_f(k) \}_{f=1}^{n_p}$, $\mathbf{s}_f(k) \in \mathcal{L}(k),\ \forall f \in \mathcal{I}_1^{n_p}$ and impose the constraints proposed therein to enforce that at least one of such points complies with the conditions present in Lemma \ref{lemma}, $\forall k \in \mathcal{I}_0^{(N^\text{max}-1)}$, $\forall c \in \mathcal{I}_1^{n_o}$,  $\forall f \in \mathcal{I}_1^{n_p}$,
\begin{align}
	-&\mathbf{P}^\text{obs}_c \mathbf{s}_f(k) \leq -\mathbf{q}^\text{obs}_c + (\mathbf{1}_{n^\text{obs}_{s,c}} - \mathbf{b}^\text{obs}_c(k))M \nonumber \\ &\qquad \quad \quad \quad \quad +   (1-b^\text{inter}_{f,c}(k))M + \mathbf{1}_{n^\text{obs}_{s,c}}\Gamma(k), \label{const:cornercut_inter1}\\
	-&\mathbf{P}^\text{obs}_c \mathbf{s}_f(k) \leq -\mathbf{q}^\text{obs}_c + (\mathbf{1}_{n^\text{obs}_{s,c}} - \mathbf{b}^\text{obs}_c(k+1))M \nonumber \\ &\qquad \quad \quad \quad \quad + (1-b^\text{inter}_{f,c}(k))M  +\mathbf{1}_{n^\text{obs}_{s,c}}\Gamma(k) , \label{const:cornercut_inter2}
\end{align}
where $b^\text{inter}_{f,c} \in \{0,1\}$ is a binary vector associated with the $f$-th IP and $c$-th obstacle at the $k$-th time step and 
\begin{align}
	\sum_{f=1}^{n_p} b^\text{inter}_{f,c} = 1,\ \forall k \in \mathcal{I}_0^{N^\text{max}},\  \forall c \in \mathcal{I}_1^{n_o},
\end{align}
guarantees that the former constraints must be simultaneously satisfied with at least one intermediary point. We refer the reader to \cite{ecc23} for a thorough discussion on this formulation.

\subsection {Novel intersample avoidance constraint}\label{subsec:new_cornercut}

This section discusses a novel constraint to address the intersample collision avoidance problem.
Similarly to the IP approach, we consider a vehicle moving through the line segment that connects its subsequent positions $\mathbf{r}(k)$ and $\mathbf{r}(k+1)$. However, in this case, the slope $\hat{\psi}_g$ of the segment is known and the corresponding line equation is 
\begin{align}
	p_y = \tan(\bar{\psi}_g)\left(p_x-r_x(k)\right) + r_y(k)), \label{eq:line_eq}
\end{align}
where $p_x \in \mathbb{R}$ and $p_y \in \mathbb{R}$ being coordinates of points belonging to the line. The line \textit{segment} is then determined by imposing the following bounds
\begin{align}
	& r_x(k) \leq p_x \leq r_x(k+1),\  -\pi/2 < \bar{\psi}_g < \pi/2,\nonumber  \\
	& r_x(k+1) \leq p_x \leq r_x(k),\ \text{otherwise}. \label{eq:line_bounds}
\end{align}
Thus, the path of the vehicle between time steps $k$ and $k+1$ can be written as:
\begin{align}
	\mathcal{P}(k) = \{[p_x,p_y] \in \mathbb{R}^2 \vert\ (\ref{eq:line_eq}),\ (\ref{eq:line_bounds}) \}.
\end{align}

Lemma \ref{lemma} can be employed to enforce that $\mathcal{P}(k)$ does not cross any obstacle. However, the new formulation does not require the prior determination of a finite number of sample points. Instead, a general point $\mathbf{z}_c(k)$ belonging to $\mathcal{P}(k)$ is defined using (\ref{eq:line_bounds}) in terms of a continuous \textit{optimization variable} $\hat{z}_c(k) \in \mathbb{R}$ as, $\forall k \in \mathcal{I}_0^{(N^\text{max}-1)}$, $\forall c \in \mathcal{I}_1^{n_o}$, $\forall g \in \mathcal{I}_{0}^{n^\text{ori}}$,
\begin{align}
	&\mathbf{z}_c(k) = [\hat{z}_c(k), \tan(\bar{\psi}_g)\hat{z}_c(k) - \tan(\bar{\psi_g})r_x(k) +r_y(k) ]^\top,\nonumber \\
	& r_x(k) \leq \hat{z}_c(k) \leq r_x(k+1), -\pi/2 < \bar{\psi}_g < \pi/2 \nonumber \\
	& r_x(k+1) \leq \hat{z}_c(k) \leq r_x(k),\ \text{otherwise}. \label{const:cornercut_aux}
\end{align}
In the special cases of $\psi_g = \pm 90^\circ$ (vertical movement), $\mathbf{z}_c(k)$ is written as
\begin{align}
	&\mathbf{z}_c(k) = [r_x(k), \hat{z}_c(k)]^\top,\nonumber \\
	& r_y(k) \leq \hat{z}_c(k) \leq r_y(k+1),\  \bar{\psi}_g = \pi/2 \nonumber \\
	& r_y(k+1) \leq \hat{z}_c(k) \leq r_y(k),\ \bar{\psi}_g = -\pi/2. \label{const:cornercut_aux2}
\end{align}

Then, constraints similar to (\ref{const:cornercut_inter1}-\ref{const:cornercut_inter2}) are enforced over $\mathbf{z}_c$ instead of the predetermined intermediary points, $\forall k \in \mathcal{I}_0^{(N^\text{max}-1)}$, $\forall c \in \mathcal{I}_1^{n_o}$,
\begin{align}
	-&\mathbf{P}^\text{obs}_c \mathbf{z}_c(k) \leq -\mathbf{q}^\text{obs}_c + (\mathbf{1}_{n^\text{obs}_{s,c}} - \mathbf{b}^\text{obs}_c(k))M \nonumber \\ &\qquad \quad \quad \quad \quad  +   (1-b^\text{ori}_g(k))M + \mathbf{1}_{n^\text{obs}_{s,c}}\Gamma(k) \label{const:cornercut_1}\\
	-&\mathbf{P}^\text{obs}_c \mathbf{z}_c(k) \leq -\mathbf{q}^\text{obs}_c + (\mathbf{1}_{n^\text{obs}_{s,c}} - \mathbf{b}^\text{obs}_c(k+1))M \nonumber \\ &\qquad \quad \quad \quad \quad + (1-b^\text{ori}_g(k))M  +\mathbf{1}_{n^\text{obs}_{s,c}}\Gamma(k).\label{const:cornercut_2}
\end{align}

Although similar to the scheme presented in Section \ref{subsec:interpoints}, the new formulation differs fundamentally from it in the fact that the optimization \textit{searches} for any value $\mathbf{z}_c$ within $\mathcal{P}(k)$ that satisfies the condition of Lemma \ref{lemma}, whereas the IP scheme relies on the evaluation of a finite number of predetermined samples, which has been shown to directly impact the conservatism of the solutions \cite{ecc23}. We note that constraints (\ref{const:cornercut_1}-\ref{const:cornercut_2}) are piecewise affine since the trigonometric functions in (\ref{const:cornercut_aux}) are evaluated \textit{a priori} for all potential orientations, as it is done with constraint (\ref{const:dyn}). Thus, they are suitable for MILP models. 

\begin{theorem}
	Satisfaction of constraints (\ref{const:cornercut_aux}-\ref{const:cornercut_2}) guarantee that the conditions of Lemma \ref{lemma} for intersample collision avoidance of $\mathcal{O}_c$ are met.
\end{theorem}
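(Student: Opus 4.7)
The plan is to unpack the constraints in sequence and show that each implements one of the hypotheses of Lemma \ref{lemma}. I would work with a fixed step $k$ (prior to mission completion, so $\Gamma(k)=0$) and a fixed obstacle $\mathcal{O}_c$, and then argue in three stages: (i) that $\mathbf{z}_c(k)$ lies on the vehicle's actual path $\mathcal{L}(k)$; (ii) that $\mathbf{z}_c(k)$ belongs to an external halfspace $\mathcal{H}_1$ of $\mathcal{O}_c$ containing $\mathbf{r}(k)$; and (iii) that $\mathbf{z}_c(k)$ also belongs to an external halfspace $\mathcal{H}_2$ of $\mathcal{O}_c$ containing $\mathbf{r}(k+1)$. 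Together these identify $\mathbf{z}_c(k)$ as the point whose existence Lemma \ref{lemma} requires.

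For step (i), constraint (\ref{const:ori_bin}) picks exactly one active orientation binary $b^\text{ori}_g(k)=1$, fixing the heading $\bar{\psi}_g$. Because the differential-drive dynamics (\ref{eq:dyn_rx}-\ref{eq:dyn_ry}) make the vehicle move along its heading, the segment $\mathcal{L}(k)$ lies on the line with slope $\tan(\bar{\psi}_g)$ through $\mathbf{r}(k)$, which coincides with (\ref{eq:line_eq}). The bounds inside (\ref{const:cornercut_aux}), or (\ref{const:cornercut_aux2}) in the vertical case, then confine $\hat{z}_c(k)$ to the interval between the $x$- (resp.\ $y$-) coordinates of $\mathbf{r}(k)$ and $\mathbf{r}(k+1)$. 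Hence $\mathbf{z}_c(k)\in\mathcal{P}(k)=\mathcal{L}(k)$.

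For step (ii), constraint (\ref{const:col_avoid_2}) guarantees at least one index $j$ with $[\mathbf{b}^\text{obs}_c(k)]_j=1$; the corresponding row of (\ref{const:col_avoid_1}) then collapses (Big-M term vanishes) to the statement that $\mathbf{r}(k)$ lies in the external halfspace $\mathcal{H}_1$ defined by the $j$-th side of $\mathcal{O}_c$. For the same $j$ and for the active $g$ chosen in step (i), both Big-M relaxation terms in (\ref{const:cornercut_1}) vanish, yielding $\mathbf{z}_c(k)\in\mathcal{H}_1$. Step (iii) is completely symmetric: the active index $j'$ of $\mathbf{b}^\text{obs}_c(k+1)$ identifies a halfspace $\mathcal{H}_2$ with $\mathbf{r}(k+1)\in\mathcal{H}_2$, and (\ref{const:cornercut_2}) forces $\mathbf{z}_c(k)\in\mathcal{H}_2$. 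Combining the three stages gives $\mathbf{z}_c(k)\in\mathcal{L}(k)\cap\mathcal{H}_1\cap\mathcal{H}_2$, which is precisely the hypothesis of Lemma \ref{lemma}, so $\mathcal{L}(k)\cap\mathcal{O}_c=\emptyset$.

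The main obstacle will be the Big-M bookkeeping: one must simultaneously deactivate the obstacle-side Big-M (using the active $j$) and the orientation Big-M (using the active $g$) in (\ref{const:cornercut_1}-\ref{const:cornercut_2}), and verify that these activations are consistent with the ones used in (\ref{const:col_avoid_1}) at steps $k$ and $k+1$. One should also be careful to restrict the argument to steps $k$ for which $\Gamma(k)=0$, since after mission completion the constraints are relaxed and the claim is vacuous. The geometric content of the proof is short; the technical content is verifying that all three Big-M structures line up on the same choice of active indices.
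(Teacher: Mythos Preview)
Your proposal is correct and follows essentially the same route as the paper's proof: first use the active orientation binary to place $\mathbf{z}_c(k)$ on the segment $\mathcal{P}(k)=\mathcal{L}(k)$, then use the active components of $\mathbf{b}^\text{obs}_c(k)$ and $\mathbf{b}^\text{obs}_c(k+1)$ to force $\mathbf{z}_c(k)$ into the corresponding external halfspaces, yielding the hypothesis of Lemma~\ref{lemma}. Your treatment is slightly more explicit than the paper's about the Big-M bookkeeping and the $\Gamma(k)=0$ caveat, but the argument is the same.
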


\begin{proof}
	The Big-M relaxation controlled by the binary $b^\text{ori}$ in (\ref{const:cornercut_1}) and (\ref{const:cornercut_2})  guarantees that these constraints are only enforced to a point $\mathbf{z}_c$ belonging to the proper line segment $\mathcal{P}(k)$ determined in (\ref{const:cornercut_aux}) or (\ref{const:cornercut_aux2}) . 
	Notice that (\ref{const:cornercut_1}) and (\ref{const:cornercut_2}) are the same obstacle avoidance constraints as in (\ref{const:col_avoid_1}), but applied to $\mathbf{z}_c$. Thus, if a value of $\mathbf{z}_c$ satisfies (\ref{const:cornercut_1}), it must belong to at least one external halfspace of $\mathcal{O}_c$ that also contains $\mathbf{r}(k)$. The same reasoning holds for $\mathbf{z}_c$ satisfying (\ref{const:cornercut_2}) and belonging to an external halfspace containing  $\mathbf{r}(k+1)$. Thus, by imposing (\ref{const:cornercut_1}) and (\ref{const:cornercut_2}) together, one enforces $\mathbf{z}_c$ to simultaneously belong to the  halfspaces containing $\mathbf{r}(k)$ and $\mathbf{r}(k+1)$ (their intersection) and the condition of Lemma \ref{lemma} is met for the $c$-th obstacle. 
\end{proof}

\subsection{Optimization problem}\label{subsec:opt_prob}

Let $\boldsymbol{\lambda} = [r_x,r_y,\sigma,\xi,a,\psi,b^\text{ori}_g,b^\text{obs}_c,b^\text{str},b^\text{des}_t,b^\text{hor}]$ denote the vector of optimization variables associated with the problem and $\mathbf{b}^\text{hor} = [b^\text{hor}(1),b^\text{hor}(2),\dots,b^\text{hor}(N^\text{max})]^\top$. Then, the following optimization model is proposed.

\begin{subequations}
	\begin{flalign}
		%	&~\nonumber\\ %[10pt] 
		&\text{\textbf{Pick-up and delivery motion planning problem.}} \nonumber \\
		% COST
		&\underset{\boldsymbol{\lambda} }{\textrm{min}}\ [1,2,\dots,N^\text{max}] \mathbf{b}^\text{hor}+  \mu \sum_{k=0}^{N^\text{max}}]\vert a(k) \label{cost}\vert  \\
		&\textrm{s.t., } \nonumber \\
		% Dynamics
		&\text{(\ref{eq:dyn_sigma}-\ref{const:ori_bin}) (kinematics-dynamics)}, \\
		&(\text{\ref{const:psi_inc}) (orientation increment)}, \\
		% Target visitation
		&  \text{(\ref{const:pick_up_region_1}-\ref{const:terminal}) (pick up and delivery)},\\
		% Obstacle avoidance
		&  \text{(\ref{const:ope_area}-\ref{const:col_avoid_2}) (collision avoidance)}, \\
		% Corner cutting
		& \text{(\ref{const:cornercut_aux}-\ref{const:cornercut_2}) (intersample collision avoidance)}, \\
		% Velocity and acceleration bounds
		&  \xi^{\min} \leq \xi(k) \leq \xi^{\max},\  \forall k \in \mathcal{I}_0^{N^\text{max}}\\
		&  a^{\min} \leq a(k) \leq a^{\max},\  \forall k \in \mathcal{I}_0^{N^\text{max}}.
	\end{flalign}
\end{subequations}

Where the cost (\ref{cost})  is comprised of elements penalizing time expenditure and control effort in terms of the acceleration $a(k)$, with the former being weighted by a constant $\mu >0$ selected by the user.
\section{Results}

We evaluate the new intersample collision avoidance constraint with a pick-up and delivery trajectory planning example and a statistical analysis employing the Monte Carlo method. All tests were performed using a computer equipped with 16 GB of RAM and an Intel$^\text{\textregistered}$ i5-1135G7 (2.40 GHz clock) CPU. The optimization models were built using the Yalmip package \cite{Lofberg2004} and the Multi-Parametric (MPT) toolbox \cite{mpt}.  The Gurobi$^\text{\textregistered}$ \cite{gurobi}  solver version 9.5.2  was used to solve the MILP problems. 

\subsection{Pick-up and delivery simulation}

The scenario considered  contains three destinations $\mathcal{D}_j,\ j = {1,2,3}$ and a pick-up region $\mathcal{P}$ distributed in a $100\times100$ m area (See Fig. \ref{fig:traj1}). The micro-mobility vehicle starts at $k=0$ within $\mathcal{D}_3$, must pick up a load (passengers or goods) in $\mathcal{P}$, and then proceed to reach all three destinations. In the center of the environment, an obstacle $\mathcal{O}_1$ represents a no-entry region (e.g., squares, grass fields, or buildings).

We compare the optimization problem presented in Section \ref{subsec:opt_prob} considering: a) the novel intersample collision method proposed and b) the classical approach from \cite{richardsCornerCut}, i.e., constraint (\ref{const:corner_cut_old}).
 The maximum horizon was selected as $N^\text{max} = 14$ and $T_s = 2$ s. The bounds were chosen as $\Delta \psi^\text{min} = -45^\circ,\ \Delta \psi^\text{max} = 45^\circ$, $\xi^\text{min}=0,\ \xi^\text{max} = 10$ m/s, and $a^\text{min} = -15,\ a^\text{max} = 15$ m/s$^2$. The control effort weight was selected as $\mu = 0.01$. The results are presented in Fig. \ref{fig:traj1}. 

\begin{figure}[!ht]
	\begin{center}	\includegraphics[width=0.43\textwidth]{./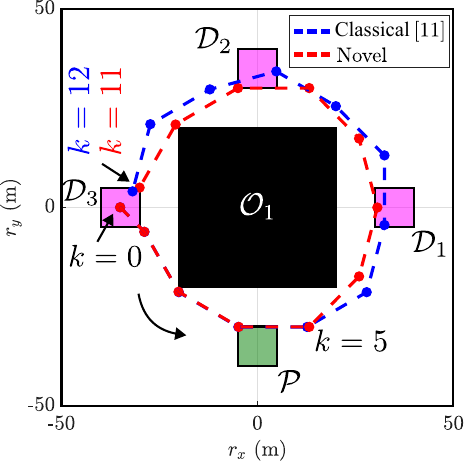}
	\end{center}
	\caption{Comparison between trajectories using novel and classical intersample collision avoidance constraints in a micro-mobility motion pĺanning problem.}
	\label{fig:traj1}
\end{figure}

The resulting trajectories overlap in the initial five time steps. The first substantial difference is observed at $k=6$, as the novel approach is able to yield a direct transition between a position at the bottom external halfspace of $\mathcal{O}_1$ (at $k=5$) to the right-hand side halfspace (at $k=6$), while the classical method places the position of the vehicle at time step $k=6$ in the intersection of these halfspaces. Similar behavior is observed in the following transitions through the corners of $\mathcal{O}_1$. As a result, the circular path given by the novel approach is smaller than its counterpart and the vehicle requires one less time step to reach its destination.

\begin{figure*}[!ht]
	\begin{center}	\includegraphics[width=2\columnwidth]{./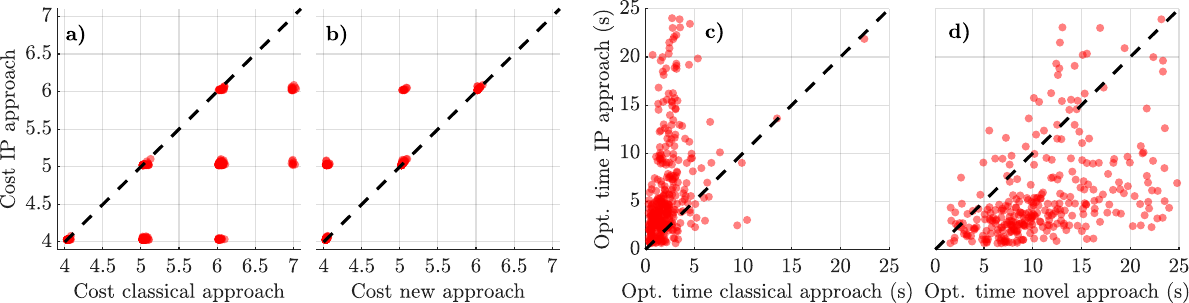}
	\end{center}
	\caption{Comparison of optimal cost: a) IP and \cite{richardsCornerCut}. b) IP and novel method. Comparison of optimization times: c) IP and \cite{richardsCornerCut}. d) IP and novel method. }
	\label{fig:comparison}
\end{figure*}

\subsection{Statistical analysis}

\begin{table}[h!]
	\caption{Performance of the methods. Each metric is evaluated in terms of mean and 95\% bootstrap confidence intervals (CI).}
	\centering
	\begin{tabular}{lccc}
		\toprule
		% \multicolumn{3}{l}{\textbf{Environment 1: Sparse target distribution}}  &  & &   \\
		% \midrule
		& & Cost & Optimization Time (s)      \\
		\midrule
		Classical \cite{richardsCornerCut}   & \makecell{Mean \\CI\\ Max.} & \makecell{$5.32$\\ $(5.23,5.40)$  \\$8.02$} & \makecell{$\mathbf{2.61}$\\ $(2.39,2.99)$\\$32.38$}    \\ 
		\midrule
		IP \cite{ecc23} & \makecell{Mean\\CI\\ Max.} & \makecell{$4.59$\\$(4.54,4.66)$\\ $6.09$}  & \makecell{$14.77$\\$(11.96,19.42)$ \\ $482.6$}  \\ 
		\midrule
		Novel & \makecell{Mean\\CI\\ Max.}& \makecell{$\mathbf{4.44}$\\$(4.38,4.50)$\\ $6.09$} & \makecell{$18.94$\\$(17.17,21.92)$\\ $276.5$}   \\
		\bottomrule
	\end{tabular}\label{tab:results_env1}
\end{table}

We also performed a statistical evaluation of the techniques using the Monte Carlo method with 400 randomized scenarios. For the sake of simplicity, we considered the optimization problem presented in Section \ref{subsec:opt_prob} with a single target. For the  tests with the IP scheme, (\ref{const:cornercut_inter1}-\ref{const:cornercut_inter2}) were employed for intersample avoidance with $n_p = 5$.

The vehicle and the target were randomly placed on opposite sides of each environment using a uniform distribution. A random field with 4 to 6 four-sided polygonal obstacles was then generated between the vehicle and the target. 

Table \ref{tab:results_env1} summarizes the results in terms of means and confidence intervals (CI) of the cost and optimization time to reach global optimal solutions.
The classical approach yields worse quality trajectories in terms of the proposed cost (average 5.32). However, it requires substantially less computational time, being the only approach suitable for closed-loop motion planning (receding horizon strategies) where the optimization is repeated periodically \cite{kuwata}. The intermediary points scheme provides better quality solutions, with an average cost of 4.59, at the expense of longer optimization times with an average of 14.77 seconds and a maximum value reaching over 8 minutes. The novel technique provides the best quality trajectories (average cost of $4.44$)  but also demands $22\%$ more computational time than the intermediary points scheme. Nevertheless, since both of these approaches are only suitable for open-loop planning due to the high computational times, the novel formulation is preferred due to the better quality of solutions. Moreover, the maximum time required by the novel approach was $57.3 \%$ shorter than the one required by the IP method.

Fig. \ref{fig:comparison} presents the cost and optimization time data points of the simulations. In the case of the latter, only data points within the neighborhood of the confidence intervals are presented for the sake of better visualization. 	
The costs related to the classical approach were higher or approximately equal to the IP technique for all considered scenarios, as depicted by Fig. \ref{fig:comparison}a. Conversely, Fig. \ref{fig:comparison}b shows that, for the same scenarios, the novel formulation always provided better or approximately equal results when compared to the IP scheme. 

In the case of the optimization times, Fig. \ref{fig:comparison}c shows that in most cases the IP formulation requires more time to compute the global optimal solutions than the classical approach, with few exceptions being present. Fig \ref{fig:comparison}d demonstrates that in most simulations, the IP method required less computational time to finish the optimization than the novel approach, although it must be noted that there are several instances where the latter, which provides equal or better quality solutions, was solved in less time than the former.

\section{CONCLUSIONS} \label{sec:conclusion}

This work explored strategies to guarantee intersample collision avoidance in Mixed-Integer Linear Programming (MILP) motion planning formulations considering a problem of micromobility with differential drive platforms. We demonstrated that with very simple modifications, the intermediary points (IP) approach from \cite{ecc23} can be used for this purpose. A novel formulation was also proposed and compared to the classical scheme from \cite{richardsCornerCut} and the IP one. We concluded that both the novel and IP schemes provide better quality solutions than the one from \cite{richardsCornerCut}, but are unsuitable for closed-loop strategies due to the substantial computational time required. A statistical analysis demonstrated that the novel formulation provides the best trajectories although being more computationally complex, turning it into an attractive method for open-loop planning. Future work should expand the proposed formulation to handle intersample collisions between agents to enable its use in multi-agent motion planning problems. Experiments with  small-scale micro-mobility platforms would further validate the proposal.

\addtolength{\textheight}{-12cm}   % This command serves to balance the column lengths
                                  % on the last page of the document manually. It shortens
                                  % the textheight of the last page by a suitable amount.
                                  % This command does not take effect until the next page
                                  % so it should come on the page before the last. Make
                                  % sure that you do not shorten the textheight too much.

\bstctlcite{IEEEexample:BSTcontrol}
\bibliographystyle{IEEEtran}
\bibliography{IEEEexample}

\begin{thebibliography}{10}
\providecommand{\url}[1]{#1}
\csname url@rmstyle\endcsname
\providecommand{\newblock}{\relax}
\providecommand{\bibinfo}[2]{#2}
\providecommand\BIBentrySTDinterwordspacing{\spaceskip=0pt\relax}
\providecommand\BIBentryALTinterwordstretchfactor{4}
\providecommand\BIBentryALTinterwordspacing{\spaceskip=\fontdimen2\font plus
\BIBentryALTinterwordstretchfactor\fontdimen3\font minus
  \fontdimen4\font\relax}
\providecommand\BIBforeignlanguage[2]{{%
\expandafter\ifx\csname l@#1\endcsname\relax
\typeout{** WARNING: IEEEtran.bst: No hyphenation pattern has been}%
\typeout{** loaded for the language `#1'. Using the pattern for}%
\typeout{** the default language instead.}%
\else
\language=\csname l@#1\endcsname
\fi
#2}}

\bibitem{micromob1}
S.~Liu and J.-L. Gaudiot, ``Autonomous vehicles lite self-driving technologies
  should start small, go slow,'' \emph{IEEE Spectrum}, vol.~57, no.~3, pp.
  36--49, 2020.

\bibitem{micromobAR}
H.~M. Koki~Aizawa, Yunosuke~Kuramitsu and Y.~Yasui, ``Efficient micro-mobility
  path planning without using high-definition maps,'' \emph{Adv. Robotics},
  vol.~38, no.~22, pp. 1616--1633, 2024.

\bibitem{micromob2}
H.~Christensen, D.~Paz, H.~Zhang, D.~Meyer, H.~Xiang, Y.~Han, Y.~Liu, A.~Liang,
  Z.~Zhong, and S.~Tang, ``Autonomous vehicles for micro-mobility,''
  \emph{Autonomous Intelligent Systems}, vol.~1, pp. 1--35, 2021.

\bibitem{micromob3}
R.~L. Abduljabbar, S.~Liyanage, and H.~Dia, ``The role of micro-mobility in
  shaping sustainable cities: A systematic literature review,'' \emph{Transport
  Res. D-Tr. E.}, vol.~92, p. 102734, 2021.

\bibitem{cones}
A.~Caregnato-Neto, M.~R. O.~A. Maximo, and R.~J.~M. Afonso, ``Mixed-integer
  motion planning for nonholonomic robots under visible light communication
  constraints,'' \emph{Optim Control Appl Meth}, vol. n/a, no. n/a, 2025.

\bibitem{quirynen2024real}
R.~Quirynen, S.~Safaoui, and S.~Di~Cairano, ``Real-time mixed-integer quadratic
  programming for vehicle decision-making and motion planning,'' \emph{IEEE
  Tran. on Control Sys. Tech.}, vol.~33, no.~1, pp. 77--91, 2025.

\bibitem{richards2005mixed}
A.~Richards and J.~How, ``Mixed-integer programming for control,'' in
  \emph{Proc. Amer. Contr. Conf.}\hskip 1em plus 0.5em minus 0.4em\relax IEEE,
  2005, pp. 2676--2683.

\bibitem{afonso2020}
R.~J. Afonso, M.~R. Maximo, and R.~K. Galvão, ``Task allocation and trajectory
  planning for multiple agents in the presence of obstacle and connectivity
  constraints with mixed-integer linear programming,'' \emph{Int. J. of Robust
  Nonlin.}, vol.~30, no.~14, pp. 5464--5491, 2020.

\bibitem{kuwata}
Y.~Kuwata and J.~P. How, ``Stable trajectory design for highly constrained
  environments using receding horizon control,'' in \emph{Proc. Amer. Contr.
  Conf.}, vol.~1.\hskip 1em plus 0.5em minus 0.4em\relax IEEE, 2004, pp.
  902--907.

\bibitem{kawakami_cornercut}
M.~H. Maia and R.~K. Galvao, ``On the use of mixed-integer linear programming
  for predictive control with avoidance constraints,'' \emph{Int. J. of Robust
  Nonlin.}, vol.~19, no.~7, pp. 822--828, 2009.

\bibitem{richardsCornerCut}
A.~Richards and O.~Turnbull, ``Inter-sample avoidance in trajectory optimizers
  using mixed-integer linear programming,'' \emph{Int. J. of Robust Nonlin.},
  vol.~25, no.~4, pp. 521--526, 2015.

\bibitem{cornercut_tedrake}
R.~Deits and R.~Tedrake, ``Efficient mixed-integer planning for {UAVs} in
  cluttered environments,'' in \emph{2015 IEEE Int. Conf.on Robot. and Auto.
  (ICRA)}, 2015, pp. 42--49.

\bibitem{ecc23}
A.~Caregnato-Neto, M.~R. O.~A. Maximo, and R.~J.~M. Afonso, ``A novel line of
  sight constraint for mixed-integer programming models with applications to
  multi-agent motion planning,'' in \emph{2023 Euro. Contr. Conf. (ECC)}, 2023,
  pp. 1--6.

\bibitem{Lofberg2004}
J.~L{\"{o}}fberg, ``Yalmip: A toolbox for modeling and optimization in
  matlab,'' in \emph{Proc. IEEE Int. Conf. on Rob. and Auto.}, Taipei, Taiwan.,
  2004.

\bibitem{mpt}
\BIBentryALTinterwordspacing
M.~Kvasnica, P.~Grieder, and M.~Baoti\'{c}, ``{Multi-Parametric Toolbox
  (MPT)},'' 2004. [Online]. Available: \url{http://control.ee.ethz.ch/~mpt/}
\BIBentrySTDinterwordspacing

\bibitem{gurobi}
L.~Gurobi~Optimization, ``Gurobi optimizer reference manual.
  http://www.gurobi.com,'' 2021.

\end{thebibliography}

\end{document}